\newcommand*{\transpose}{%
  {\mathpalette\@transpose{}}%
}
\newcommand*{\@transpose}[2]{%
  \raisebox{\depth}{$\m@th#1\intercal$}%
}
\def\eqref#1{equation~\ref{#1}}
\def\1{\bm{1}}
\def\rmO{{\mathbf{O}}}
\def\rmS{{\mathbf{S}}}
\def\rmT{{\mathbf{T}}}
\def\rmW{{\mathbf{W}}}
\DeclareMathAlphabet{\mathsfit}{\encodingdefault}{\sfdefault}{m}{sl}
\SetMathAlphabet{\mathsfit}{bold}{\encodingdefault}{\sfdefault}{bx}{n}
\def\gF{{\mathcal{F}}}
\def\sR{{\mathbb{R}}}
\newcommand{\R}{\mathbb{R}}
\algnewcommand\algorithmicparfor{\textbf{parfor}}
\algnewcommand\algorithmicpardo{\textbf{do}}
\algnewcommand\algorithmicendparfor{\textbf{end\ parfor}}
\definecolor{darkgreen}{rgb}{0.0, 0.5, 0.13}
\theoremstyle{plain}
\newtheorem{theorem}{Theorem}[section]
\newtheorem{proposition}[theorem]{Proposition}
\theoremstyle{definition}
\newtheorem{definition}[theorem]{Definition}
\theoremstyle{remark}
\newtheorem{example}[theorem]{Example}
\definecolor{lb}{RGB}{31,119,180}
\newtcolorbox{mybox}[1]{colback=lb!1!white,colframe=lb!70!black,fonttitle=\bfseries,title=#1}
\newcommand{\rmapf}[2]{\mathcal{F}_{#1 \unlhd #2}}
\title{Cooperative Sheaf Neural Networks}
\author{%
  André Ribeiro \\
  Getulio Vargas Foundation\\
  \texttt{andre.guimaraes@fgv.br} \\
   \And
   Ana Luiza Tenório \\
   Getulio Vargas Foundation \\
   \texttt{ana.tenorio@fgv.br} \\   
   \And
   Juan Belieni \\
   Getulio Vargas Foundation \\
   \texttt{juanbelieni@gmail.com} \\
      \AND
   Amauri H. Souza \\
   Federal Institute of Ceará \\
   \texttt{amauriholanda@ifce.edu.br} \\
      \And
   Diego Mesquita \\
   Getulio Vargas Foundation \\
   \texttt{diego.mesquita@fgv.br} 
}
\pgfplotsset{compat=1.18} 
\begin{document}

\vspace{-9pt}
\maketitle
\vspace{-9pt}

\begin{abstract}
Sheaf neural networks (SNNs) leverage cellular sheaves to induce flexible diffusion processes on graphs, generalizing the diffusion mechanism of classical graph neural networks. While SNNs have been shown to cope well with heterophilic tasks and alleviate oversmoothing, we show that there is further room for improving sheaf diffusion. More specifically, we argue that SNNs do not allow nodes to independently choose how they cooperate with their neighbors, i.e., whether they convey and/or gather information to/from their neighbors. To address this issue, we first introduce the notion of cellular sheaves over directed graphs and characterize their in- and out-degree Laplacians. We then leverage our construction to propose Cooperative Sheaf Neural Network (CSNN). Additionally, we formally characterize its receptive field and prove that it allows nodes to selectively attend (listen) to arbitrarily far nodes while ignoring all others in their path, which is key to alleviating oversquashing. Our results on synthetic data empirically substantiate our claims, showing that CSNN can handle long-range interactions while avoiding oversquashing. We also show that CSNN performs strongly in heterophilic node classification and long-range graph classification benchmarks. \looseness=-1
\end{abstract}

\vspace{-12pt}
\section{Introduction}

Graph neural networks (GNNs) have become the standard models for an array of predictive tasks over networked data, with far-reaching applications in, e.g., physics simulation~\citep{sanchez2020learning}, recommender systems \citep{ying2018graph},  and molecular modeling \citep{duvenaud2015convolutional,gilmer2017neural}.
Nonetheless, classical GNNs have well-known pitfalls. For instance, they typically struggle on heterophilic tasks~\citep{zhu2020beyond} --- i.e., cases where connected nodes often belong to different classes or have dissimilar features.
Furthermore, GNNs may also be susceptible to oversmoothing~\citep{oono2019graph} and oversquashing~\citep{alon2020bottleneck}.
Oversmoothing occurs when stacking multiple GNN layers yields increasingly similar node representations, whereas oversquashing refers to the loss of information when carrying information through increasingly long paths --- due to the compression of exponentially growing information into fixed-size vectors.    

A recent line of works~\citep[e.g.,][]{hansen2020sheaf,bodnar2022neural,bamberger2024bundle}, which we henceforth refer to as Sheaf Neural Networks (SNNs), proposes modeling node interactions using cellular sheaves to achieve a principled solution to deal with oversmoothing and heterophilic tasks.
A cellular sheaf $\gF$ over an undirected graph associates (i) vector spaces $\gF(i)$ and $\gF(e)$, known as \emph{stalks}, to each vertex $i$ and each edge $e$, and (ii)  a linear map $\rmapf{i}{e}$, known as a \emph{restriction map}, to each incident vertex-edge pair $i\unlhd e$. These mathematical constructs induce a sheaf Laplacian which is governed by the restriction maps and generalizes the conventional Graph Laplacian. \looseness=-1  

In a parallel line of investigation, \citet{pmlr-v235-finkelshtein24a} have recently shown that GNNs generally lack the flexibility to allow for nodes to individually select how they cooperate with their neighbors, i.e., choose whether they convey and/or gather information to/from their neighbors. This selective communication (also called cooperative behavior) is an especially desirable trait to tackle oversquashing, as it allows controlling the amount of information flowing between nodes. A natural question ensues: \emph{Can sheaf neural networks achieve cooperative behavior?}

In this paper, we provide a negative answer to this question. More precisely, for SNNs to zero out all the incoming information at a node $i$, they must set $\rmapf{i}{e} = 0$ for all incident edges $e$, which also implies the information flowing from $i$ is suppressed (see \Cref{fig:un_vs_directed}). To circumvent this limitation, we introduce the notion of directed cellular sheaves (\Cref{def:directed_sheaves}) and define their in- and out-degree Laplacians (\Cref{def:laplacians}). Leveraging these notions, we propose Cooperative Sheaf Neural Networks (CSNNs). Importantly, we show that cooperative behavior can be achieved using only a pair of restriction maps per node, which considerably increases computational efficiency compared to full sheaves --- in which the amount of restriction maps increases linearly with the number of edges. \looseness=-1

Our theoretical results show that CSNN allows for nodes to selectively listen to other arbitrarily distant nodes, which is a desirable trait to alleviate over-squashing. Our results on synthetic data specifically designed to induce over-squashing~\citep{alon2020bottleneck} substantiate our claims, showcasing CSNNs' superior potential to handle long-range dependencies. In addition, extensive real-world experiments on 11 node-classification benchmarks and 2 long-range graph-classification tasks demonstrate that CSNN typically  outperforms existing SNNs and cooperative GNNs.

In summary, our \textbf{contributions} are:
\begin{enumerate}[nosep]
    \item We introduce the notions of in- and out-degree Laplacians for cellular sheaves over directed graphs, which can be used to model asymmetric relationships between nodes. We treat undirected edges as a pair of directed ones and leverage these constructions to propose CSNN --- provably extending the flexibility of sheaf diffusion to accommodate cooperative behavior;
    \item We provide a theoretical analysis of CSNN, showing that: ($a$) for each layer $t$ in CSNN, nodes may be affected by information from nodes at distance up to $2t$-hop neighbors (\Cref{prop:2t-hop}), instead of up to $t$-hop neighbors in usual GNNs; and ($b$) there exist restriction maps which make the embedding of a node $i$ at layer $t$ highly sensitive to the initial feature of a node $j$, where $t$ is also the distance between $i$ and $j$ (\Cref{prop:distant_node}).
    \item We carry an extensive experimental campaign to validate the effectiveness of CSNNs, encompassing both synthetic and real-world tasks. 
    Our experiments on synthetic data show that CSNNs is remarkably capable of mitigating over-squashing and modeling long-range dependencies. 
    Meanwhile, results on over 13 real-world tasks show CSNNs typically outperform prior sheaf-based models and cooperative GNNs.
\end{enumerate}

\section{Background}
\label{sec:background}

For the sake of completeness, here we provide a summary of core concepts concerning cellular sheaves over undirected graphs. We also briefly discuss neural sheaf diffusion and cooperative GNNs.\looseness=-1

In this work, we denote an undirected graph by a tuple $G=(V, E)$ where $V$ is a set of vertices (or nodes) and $E$ is a set of unordered pairs of (distinct) vertices, called edges, with $n=|V|$ and $m=|E|$. We denote the neighbors of a node $i$ in $G$ by $N(i)=\{j : \{i, j\} \in E\}$. Next, \Cref{def:cell_sheaf} introduces the notion of cellular sheaves over undirected graphs. 
\looseness=-1

\begin{definition} 
    A \textbf{cellular sheaf} $(G, \mathcal{F})$ over a (undirected) graph $G = (V, E)$ associates:\looseness=-1 \vspace{-4pt}
\begin{enumerate}[noitemsep,topsep=0pt,left=8pt] 
        \item Vector spaces $\mathcal{F}(i)$ to each vertex $i \in V$ and $\mathcal{F}(e)$ to each edge $e \in E$, called \textbf{stalks}.
        \item Linear maps $\mathcal{F}_{i \unlhd e} \!:\! \mathcal{F}(i) \to \mathcal{F}(e)$ to each incident vertex-edge pair $i \unlhd e$, called \textbf{restriction maps}.
    \end{enumerate}
    \label{def:cell_sheaf}
\end{definition}

Hereafter, we assume all vertex and edge stalks are isomorphic to $\sR^d$. If all restriction maps are equal to the identity map, we say the cellular sheaf is constant. Moreover, if $d\!=\!1$, the sheaf is said trivial.

Given the importance of the Laplacian operator for graph representation learning, it is instrumental to define the Laplacian for undirected cellular sheaves -- a key concept in the design of SNNs. Towards this end, we introduce in \autoref{def:0cochains} the spaces of 0- and 1-cochains. 

\begin{definition}\label{def:0cochains}
    The \textbf{space of 0-cochains}, denoted by $C^0(G, \gF)$, and the \textbf{space of 1-cochains}, $C^1(G, \gF)$, of a cellular sheaf $(G, \gF)$ are given by 
    \begin{align}
        C^0(G, \gF) = \bigoplus_{i \in V} \gF(i) \mbox{ and }
        C^1(G, \gF) = \bigoplus_{e \in E} \gF(e).
    \end{align} 
    where $\oplus$ denotes the (external) direct sum.
\end{definition}
Now, for each $e \in E$ choose an orientation $e \!=\! i \!\to\! j$ and consider the coboundary operator $\delta \!:\! C^0(G,\gF) \to C^1(G,\gF)$ defined by $(\delta \mathbf{X})_e = \rmapf{j}{e} \mathbf{x}_j - \rmapf{i}{e} \mathbf{x}_i$. Then, the sheaf Laplacian is defined by  $L_{\gF} = \delta^{\top}\delta$. If $\mathcal{F}$ is the trivial sheaf, $\delta^{\top}$ can be seen as the incidence matrix, recovering the $n \times n$ graph Laplacian. A more explicit way to describe the Laplacian is the following:\looseness=-1
\begin{definition}
    The \textbf{sheaf Laplacian} of a cellular sheaf $(G, \mathcal{F})$ is the linear operator $L_{\mathcal{F}}: C^0(G, \mathcal{F}) \to C^0(G, \mathcal{F})$ that, for a 0-cochain $\mathbf{X} \in C^0(G, \mathcal{F})$, outputs 
    \begin{equation}\label{eq:lap}
        L_{\mathcal{F}}(\mathbf{X})_i:=\sum_{i, j \unlhd e} \rmapf{i}{e}^{\top}\left(\rmapf{i}{e} \mathbf{x}_i-\rmapf{j}{e} \mathbf{x}_j\right) \qquad \forall i \in V.
    \end{equation}
\end{definition}
\vspace{-11pt}
The Laplacian $L_{\mathcal{F}}$ can also be seen as a positive semidefinite matrix with diagonal blocks $L_{ii} = \sum_{i \unlhd e} \rmapf{i}{e}^{\top} \rmapf{i}{e}$ and non-diagonal blocks $L_{i j} = L_{i j}^\top=-\rmapf{i}{e}^{\top} \rmapf{j}{e}$.

To build intuition around \Cref{def:cell_sheaf}, we may interpret the node stalks $\gF(i)$ as the space of private opinions held by an individual $i$, following the perspective of \citet{hansen2021opinion}. For an edge $e$ connecting nodes $i$ and $j$, the stalk $\gF(e)$ corresponds to the public opinions exchanged between them.  \looseness=-1

That being said, note that $\ker L_{\gF} =  \{\mathbf{X} \in C^0(G, \gF) \:|\: \rmapf{i}{e}\mathbf{x}_i = \rmapf{j}{e}\mathbf{x}_j\, \forall e = \{i,j\} \in E(G) \} $. This can be understood as the space of public agreement between all pairs of neighboring nodes $i$ and $j$. Note that $i$ and $j$ can have distinct opinions about the same topic on their respective private opinion spaces $\mathcal{F}(i)$ and $\mathcal{F}(j)$; however, when they publicly discuss this topic, they may prefer to not manifest their true opinion. Alternatively, since the edge stalks may be different from the node stalks, some topics of the private opinion spaces do not need to be discussed at all. In both cases, the apparent consensus lies in $\operatorname{Ker}L_{\gF}$. 

\textbf{Vector bundles.} When restriction maps are orthogonal, we call the sheaf a vector bundle. In this case, $L_{\mathcal{F}}(\mathbf{X})_i:=\sum_{i, j \unlhd e} \left(\mathbf{x}_i-\rmapf{i}{e}^{\top}\rmapf{j}{e} \mathbf{x}_j\right)$, for any $i \in V$. 
Flat vector bundles are special cases of vector bundles in which we assign an orthogonal map $\rmO_i$ to each node $i$ and set $\rmapf{i}{e} = \rmO_i$ for all $e$ incident to $i$. This entails $L_{\mathcal{F}}(\mathbf{X})_i:=\sum_{i, j \unlhd e} \left(\mathbf{x}_i-\rmO_i^{\top}\rmO_j \mathbf{x}_j\right)$, for any $i \in V$. Note that flat vector bundles only comprise $n$ restriction maps as opposed to $2 m$ maps in general cellular sheaves. Prior works \citep{bodnar2022neural,bamberger2024bundle} have leveraged these simpler constructions to propose computationally efficient sheaf-based neural networks.

\textbf{Neural Sheaf Diffusion (NSD).} \citet{bodnar2022neural} introduce NSD building on Euler iterations of the heat equation induced by $\Delta_{\gF}$, i.e., $\dot{\mathbf{X}} = -\Delta_{\gF} \mathbf{X}$.
First, we project the initial node features $\mathbf{X}$ into $h$ channels using an MLP $\mathbf{\eta}$, i.e., $\mathbf{X}_0 = \mathbf{\eta}(\mathbf{X}) \in \sR^{nd \times h}$. Then, NSD recursively computes
\begin{equation}
\mathbf{X}_{t+1} = (1 + (\bm{1}_{n\times h} \otimes \varepsilon)) \odot \mathbf{X}_t - \sigma(\Delta_{\gF(t)}(\mathbf{I} \otimes \mathbf{W}_{1,t})\mathbf{X}_t\mathbf{W}_{2,t}),
\end{equation}
where $\bm{1}_{n \times h}$ is an $n \times h$ matrix of ones, $\Delta_{\gF(t)}$ is the sheaf Laplacian at layer $t$, $\varepsilon \in [-1,1]^{d}$ is a (learned) vector scaling the features along each stalk dimension, $\sigma$ is an element-wise non-linearity, and $\rmW_{1,t} \in \sR^{d \times d}, \rmW_{2,t} \in \sR^{h \times h}$ are weight matrices. Importantly, the restriction maps which govern $\Delta_{\gF(t)}$ are learned in an end-to-end fashion, alongside $\rmW_{1,t}$ and $\rmW_{2,t}$.

\textbf{Cooperative GNNs.} \citet{pmlr-v235-finkelshtein24a} recently proposed flexibilizing message-passing GNNs by treating nodes as players that can choose how they cooperate with their neighbors. More specifically, cooperative GNNs employ an auxiliary GNN, called the \emph{action} network, that decides individually how each node partakes in the message passing of the base GNN, or \emph{environment} network. The action network decides whether each node only propagates information (\texttt{PROPAGATE}), only gathers information from neighbors (\texttt{LISTEN}), does none (\texttt{ISOLATE}) or does both (\texttt{STANDARD}). 
Cooperative GNNs learn the action and environment GNNs simultaneously,
using the straight-through Gumbel-Softmax estimator to propagate gradients through the discrete actions of the action network.

\section{Cellular sheaves for directed graphs}\label{sec:sheaved_for_directed graphs}

We kick off this section addressing our initial research question: Can SNNs achieve cooperative behavior?
Recall that communication between nodes in an SNN is governed by its sheaf Laplacian, which is induced by the restriction maps. Thus, in SNNs, picking a state among \texttt{PROPAGATE}, \texttt{LISTEN}, \texttt{ISOLATE}, and \texttt{STANDARD} node $i$ translates to choosing a suitable configuration for its respective restriction maps.
The following result says that SNNs cannot fully alternate between these action. \looseness=-1

\begin{proposition}\label{thm:snns_coop}
Let $i\in V$. If $L_{\gF}(\mathbf{X})_i$ does not depend on $\mathbf{x}_j$ for any $j\in V$ neighbor of $i$, then $L_{\gF}(\mathbf{X})_j = 0$ or $L_{\mathcal{F}}(\mathbf{X})_j = \sum_{j, i \unlhd e} \rmapf{j}{e}^{\top}\rmapf{j}{e} \mathbf{x}_j$.
\end{proposition}
\vspace{-6pt}
Put plainly, \Cref{thm:snns_coop} states that the sheaf diffusion provides a framework where a node $i$ that does not \texttt{LISTEN} (since it does depend on $j$)  must not \texttt{PROPAGATE} (since the update of $j$ does not depend on $i$), independently of the action that $j$ takes. In other words, \texttt{PROPAGATE} implies \texttt{LISTEN}, which means it collapses to \texttt{ISOLATE} (see \cref{fig:un_vs_directed}).

We can circumvent this limitation by treating undirected edges as a pair of directed ones, creating an additional channel of communication between nodes. To accommodate directed edges (i.e., $E \subseteq V \times V$), we propose using cellular sheaves over directed graphs. 

Cellular sheaves over directed graphs must distinguish the restriction map where $i$ is the source node of an edge from the restriction map where $i$ is the target node of an edge. Therefore, we change the edge notation from $e$ to $ij$ and $ji$ to make this distinction explicit. See \autoref{fig:sheaves} for an illustration.

\begin{definition}\label{def:directed_sheaves}
    A \textbf{cellular sheaf} $(G, \mathcal{F})$ over a (directed) graph $G = (V, E)$ associates:\looseness=-1 \vspace{-4pt}
\begin{enumerate}[nosep]
        \item Vector spaces $\mathcal{F}(i)$ to each vertex $i \in V$ and $\mathcal{F}(ij)$ to each edge $ij \in E$, called \textbf{stalks}.
        \item A linear map $\mathcal{F}_{i \unlhd ij} : \mathcal{F}(i) \to \mathcal{F}(ij)$ for each incident vertex-edge pair $i \unlhd ij$ and a linear map $\mathcal{F}_{i \unlhd ji} : \mathcal{F}(i) \to \mathcal{F}(ji)$ for each incident vertex-edge pair $i \unlhd ji$, called \textbf{restriction maps}.
    \end{enumerate}
\end{definition}

\begin{figure}[t]
    \centering   
\includegraphics[width=0.8\linewidth]{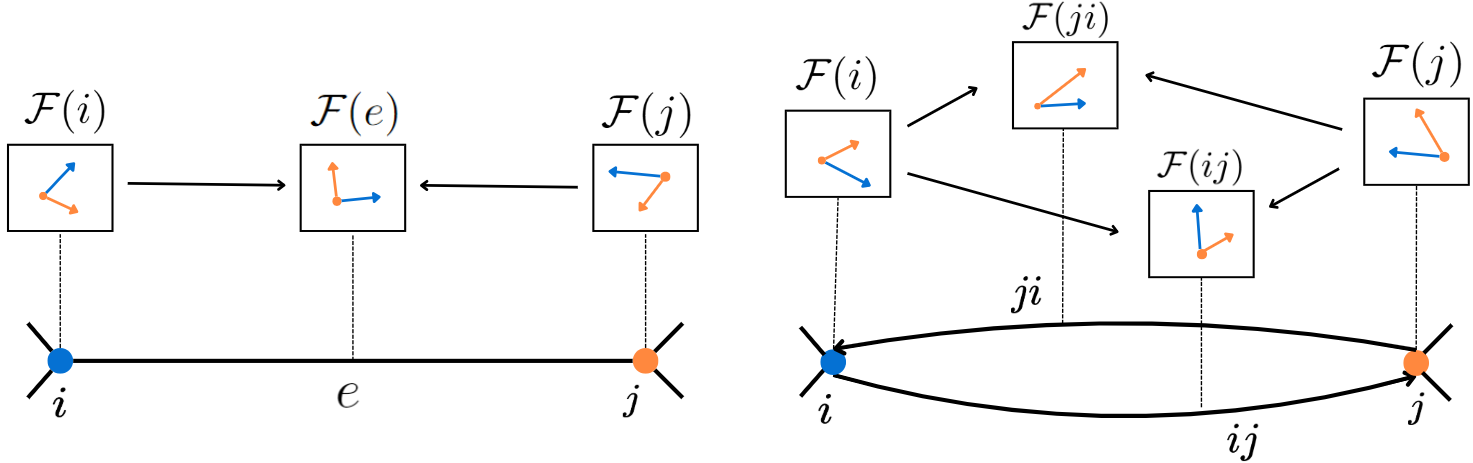}
         \vspace{-6pt}
    \caption{On the left, a cellular sheaf shown for a single edge
of an undirected graph with stalks isomorphic to $\R^2$. The
restriction maps $\mathcal{F}_{i \unlhd e}, \mathcal{F}_{j \unlhd e}$ move the vector features between these spaces. On the right, the analogous situation for a sheaf on a single pair of directed edges. Then there are four, possibly distinct, restriction maps $\mathcal{F}_{i \unlhd ij},\mathcal{F}_{i \unlhd ji},\mathcal{F}_{j \unlhd ij}, \mathcal{F}_{j \unlhd ji}$. \looseness=-1}
    \label{fig:sheaves}
    \vspace{-12pt}
\end{figure}
For simplicity, again, we henceforth assume all node and edge stalks are $d$-dimensional.

We are now left with the task of defining sheaf Laplacians that can be used for information diffusion in directed graphs.  
For directed graphs, it is common to define both in- and out-degree Laplacians \citep{AGAEV2005157}. 
Given a directed graph with possibly asymmetric adjacency matrix $A$, the out-degree Laplacian is $L^{\text{out}} := D^\text{out} - A$ and the in-degree Laplacian is $L^\text{in} := D^\text{in} - A$, with $D^\text{in}$ and $D^\text{out}$ denoting the diagonal matrices containing in- and out-degree of nodes in $V(G)$. \Cref{def:laplacians} below generalizes these notions to sheaves over directed graphs.
\looseness=-1

\begin{definition}\label{def:laplacians}
    The \textbf{out-degree sheaf Laplacian} of a cellular sheaf $(G, \mathcal{F})$ is the linear operator $L^\text{out}_{\mathcal{F}}: C^0(G, \mathcal{F}) \to C^0(G, \mathcal{F})$ that, for a 0-cochain $\mathbf{X} \in C^0(G, \mathcal{F})$, outputs 
    \begin{equation}\label{eq:lap-out}
        L^\text{out}_{\mathcal{F}}(\mathbf{X})_i:=\sum_{j \in N(i)}\left( \rmapf{i}{ij}^{\top}\rmapf{i}{ij} \mathbf{x}_i-\rmapf{i}{ji}^{\top}\rmapf{j}{ji} \mathbf{x}_j\right), \qquad \forall i \in V.
    \end{equation}
    \vspace{-12pt}
    
    The \textbf{in-degree sheaf Laplacian} of a cellular sheaf $(G, \mathcal{F})$ is the linear operator $L^\text{in}_{\mathcal{F}}: C^0(G, \mathcal{F}) \to C^0(G, \mathcal{F})$ that, for a 0-cochain $\mathbf{X} \in C^0(G, \mathcal{F})$, outputs 
    \begin{equation}\label{eq:lap-in}
        L^\text{in}_{\mathcal{F}}(\mathbf{X})_i:=\sum_{j \in N(i)}\left( \rmapf{i}{ji}^{\top}\rmapf{i}{ji} \mathbf{x}_i- \rmapf{i}{ij}^{\top}\rmapf{j}{ij} \mathbf{x}_j\right), \qquad \forall i \in V.
    \end{equation}
    \vspace{-15pt}
\end{definition}
We note that if $(G,\gF)$ is the trivial sheaf, then $L^\text{out}_{\mathcal{F}} = (L^\text{out})^{\top}$ and $L^\text{in}_{\mathcal{F}} = L^\text{in}$.

\textbf{Flat vector bundles over directed graphs.} 
We can also improve the parameter efficiency of cellular sheaves over directed graphs using flat vector bundles.
Since the graphs are directed, we need to distinguish between edges with identical endpoints but with different orientations. 
Thus, for each node $i$, we assign a source conformal map $\rmS_i$ and a target conformal map $\rmT_i$, and set  $\rmapf{i}{ij} = \rmS_i$ and $\rmapf{i}{ji}=\rmT_i $ for all neighbors $j$ of $i$. \looseness=-1

\section{Cooperative Sheaf Neural Networks}\label{sec:model}

In this section, we leverage the sheaf Laplacians in \Cref{def:laplacians} to propose Cooperative Sheaf Neural Networks (CSNNs), an SNN which allows nodes to independently to decide how they participate in message diffusion, choosing whether to broadcast their information and/or to listen from the neighbors. To exploit the asymmetric communication induced by sheaves over directed graphs, we convert our input undirected graph into a directed one by replacing undirected edges with a pair of directed ones.
\looseness=-1

We design CSNN's diffusion mechanism by composing the out-degree and the transposed in-degree sheaf Laplacians.
In practice, we use their normalized versions 
\looseness=-1
\begin{align*}
    \Delta_{\gF}^{\text{out}} = D_{\text{out}}^{-\frac{1}{2}}L_{\gF}^{\text{out}}D_{\text{out}}^{-\frac{1}{2}} \quad \text{ and }
\quad    (\Delta_{\gF}^{\text{in}})^{\top} = D_{\text{in}}^{-\frac{1}{2}}(L_{\gF}^{\text{in}})^{\top}D_{\text{in}}^{-\frac{1}{2}},
\end{align*}
where $D_{\text{in}}, D_{\text{out}}$ are the block-diagonals of the in and out Laplacians, respectively.
\vspace{-5pt}
\begin{mybox}{}
\vspace{-5pt}
We define a CSNN layer by augmenting the Euler discretization of our novel heat equation $\dot{\mathbf{X}} = (\Delta_{\gF}^{in})^{\top}\Delta_{\gF}^{out} \mathbf{X}$ with linear transformations and a nonlinear activation function $\sigma$:
\begin{equation}
\label{eq:model}
    \mathbf{X}_{t+1} = (1+ (\bm{1}_{n \times h} \otimes \varepsilon)) \odot \mathbf{X}_t - \sigma((\Delta_{\gF(t)}^{in})^{\top}\Delta_{\gF(t)}^{out}(\mathbf{I}_n \otimes \mathbf{W}_{1,t})\mathbf{X}_t\mathbf{W}_{2,t}),
\end{equation}
where $\bm{1}_{n \times h}$ is an $n$-by-$h$ matrix of ones, $\varepsilon \in [-1,1]^{d \times 1}$, and $\rmW_{1,t} \in \sR^{d \times d}$ and $\rmW_{2,t} \in \sR^{h \times h}$ are learned matrices responsible for mixing node features and channels, respectively.
\vspace{-3pt}
\end{mybox}
\vspace{-15pt}
\paragraph{Efficient implementation.} For computational efficiency, we use flat vector bundles to define both the in- and out- degree sheaf Laplacians. More precisely, for each node $i$, we define a source conformal map $\rmS_i$ and a target conformal map $\rmT_i$ for all neighbor $j$ of $i$. Thus, out-degree sheaf Laplacian simplifies to \looseness=-1
 \begin{align}
        L^\text{out}_{\mathcal{F}}(\mathbf{X})_i&:=\textstyle\sum_{j \in N(i)}\left(  \rmS_i^{\top}\rmS_i \mathbf{x}_i- \rmT_i^{\top}\rmS_j\mathbf{x}_j\right) \label{eq:bundle_lap-out},
    \end{align}
while the transpose of the in-degree sheaf Laplacian is
\begin{equation}\label{eq:bundle_lap-in}
       ((L^\text{in}_{\mathcal{F}})^{\top}(\mathbf{X}))_i:=\textstyle\sum_{j \in N(i)}\left( \rmT_i^{\top}\rmT_i \mathbf{x}_i- \rmT_i^{\top}\rmS_j \mathbf{x}_j\right).
    \end{equation}

Note these matrices have a block structure, with  diagonals $({L^\text{in}_{\mathcal{F}}})^{\top}_{ii} = \sum \rmT_i^{\top}\rmT_i$,  $({L^\text{out}_{\mathcal{F}}})_{ii} = \sum \rmS_i^{\top}\rmS_i$, and remaining blocks $(L^\text{in}_{\gF})^{\top}_{ij} = (L^\text{out}_{\gF})_{ij} = -\rmT^{\top}_{i}\rmS_{j}$. 
We point out that, since conformal maps are of the form $\rmS_i = C_{\rmS_i}Q_i$ and $\rmT_i = C_{\rmT_i}R_i$, for some orthogonal matrices $Q_i$, $R_i$ and scalars $C_{\rmS_i},\:C_{\rmT_i}$, computing their inverses and normalizing the Laplacian becomes trivial. In this case, block scaling simplifies to a block matrix of scalars time identity and the normalization is both numerically stable and computationally efficient.

The first step of each layer $t$ consists of computing the conformal maps $\rmT_{i,t}$ and $\rmS_{i,t}$. We do this through learnable functions $\rmS_{i,t} = \eta(G,\mathbf{X}_{t}, i)$ and $\rmT_{i,t} = \phi(G,\mathbf{X}_{t}, i)$, $\forall i \in V$. As in prior works on SNNs~\cite{bodnar2022neural, bamberger2024bundle}, we use neural networks to learn the restriction maps. In addition, we use Householder reflections \citep{mhammedi2017efficient, obukhov2021torchhouseholder} to compute orthogonal maps and multiply them by a learned positive constant for each node.

\subsection{Analysis}

\begin{wrapfigure}[14]{r}{6.6cm}
    \centering   
    \vspace{-5pt}
\includegraphics[width=0.95\linewidth]{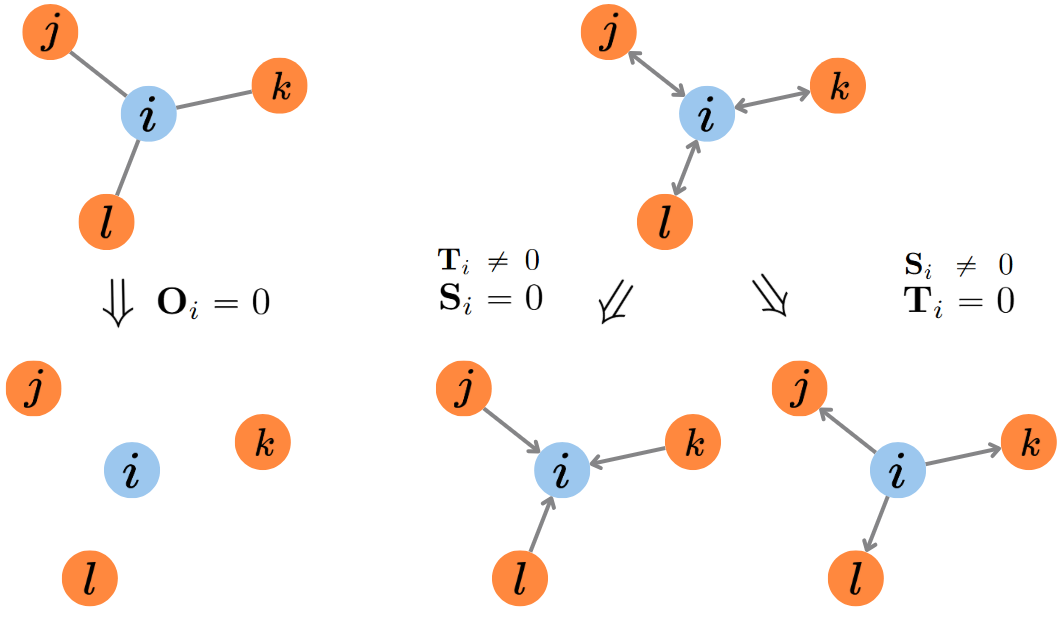}
    \caption{$\rmO_i = 0$ creates the effect of isolating the node $i$.
    Directed edges provide the possibility of  performing \texttt{LISTEN} and \texttt{PROPAGATE}, separately.}
    \label{fig:un_vs_directed}
\end{wrapfigure}
We now show that CSNN can achieve cooperative behavior, characterize its receptive field, and prove that appropriate conformal maps can help CSNNs handle long-range interactions.

We note that CSNN allows each node $i$ drift from the \texttt{STANDARD} behavior ($\rmS_i, \rmT_i \neq 0$) by zeroing-out its conformal maps. Setting 
$\rmT_i \neq 0$ drives $i$ to \texttt{LISTEN}. Setting $\rmS_i \neq 0$ 
corresponds to \texttt{PROPAGATE}. Finally, $\rmS_i = \rmT_i = 0$ implies \texttt{ISOLATE}.

 We highlight the importance of considering the directions: for undirected graphs, there is a single map $\rmO_i$ for each node $i$, where $\rmO_i = 0$ could only mean that $i$ does not communicate at all. In other words, the possible actions are only \texttt{STANDARD} and \texttt{ISOLATE}. We illustrate this in \Cref{fig:un_vs_directed}.   \looseness=-1

Thus, to show a model achieves cooperative behavior, we must ensure the following: (a) if $i$ does not listen, then its update cannot depend on $\mathbf{x}_j$, $\forall j \in V, j \neq i$; and (b) if $i$ has a non-propagating neighbor $k$, then its update cannot depend on $\mathbf{x}_k$. 
\Cref{prop:satifies_disired_property} shows that CSNN satisfies these conditions, while \Cref{fig:rebuttal} illustrates them and delineates limitations of the NSD model.

\begin{wrapfigure}[13]{r}{6cm}
    \centering   
    \vspace{-13pt}
\includegraphics[width=0.9\linewidth]{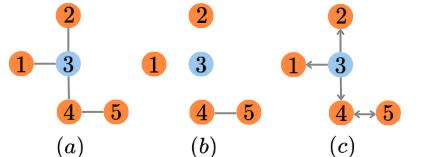}
\vspace{-5pt}
    \caption{Given a graph $(a)$ we illustrate the consequences of preventing node $3$ from listening. For NSD $(b)$, this means  $L_{\mathcal{F}}(\mathbf{X})_3$ must not depend on $\mathbf{x_j},$ for $j=1,2,4$ implying $\rmapf{j}{e} = 0$ and leading to $L_{\mathcal{F}}(\mathbf{X})_j$ not depending on $\mathbf{x_3}$, preventing node $3$ from propagating information. In CSNN $(c)$, we can set $\rmT_3 = 0$. Provided  $\rmS_3 \neq 0$, outbound communication is possible.\looseness=-1}
    \label{fig:rebuttal}
\end{wrapfigure}

\begin{proposition}\label{prop:satifies_disired_property}
    If the target map $\rmT_i$ is zero, then $((L^\text{in}_{\mathcal{F}})^{\top} L^\text{out}_{\mathcal{F}}(\mathbf{X}))_i  = 0$. If the source map $\rmS_k = 0$ for some neighbor $k$ of $i$, then $((L^\text{in}_{\mathcal{F}})^{\top} L^\text{out}_{\mathcal{F}}(\mathbf{X}))_i$ does not depend on $\mathbf{x}_k$.
\end{proposition}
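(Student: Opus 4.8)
The plan is to compute the composition $((L^\text{in}_{\mathcal F})^{\top} L^\text{out}_{\mathcal F}(\mathbf X))_i$ explicitly block by block, using the block structure recorded just before the proposition: the in- and out-Laplacians share the same off-diagonal blocks $(L^\text{in}_{\mathcal F})^{\top}_{ij} = (L^\text{out}_{\mathcal F})_{ij} = -\rmT_i^{\top}\rmS_j$, and have diagonal blocks $(L^\text{in}_{\mathcal F})^{\top}_{ii} = \sum_{j\in N(i)}\rmT_i^{\top}\rmT_i$ and $(L^\text{out}_{\mathcal F})_{ii} = \sum_{j\in N(i)}\rmS_i^{\top}\rmS_i$. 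First I would write the $i$-th block of the product as
\begin{equation*}
((L^\text{in}_{\mathcal F})^{\top} L^\text{out}_{\mathcal F}(\mathbf X))_i = \sum_{k\in V} (L^\text{in}_{\mathcal F})^{\top}_{ik}\,(L^\text{out}_{\mathcal F}(\mathbf X))_k,
\end{equation*}
and observe that $(L^\text{in}_{\mathcal F})^{\top}_{ik}$ is nonzero only for $k=i$ or $k\in N(i)$. The crucial structural fact to extract is that \emph{every} block of $(L^\text{in}_{\mathcal F})^{\top}$ in row $i$ — both the diagonal block and each off-diagonal block — carries a left factor of $\rmT_i^{\top}$: the diagonal block is $\rmT_i^{\top}(\sum \rmT_i)$ and the off-diagonal blocks are $\rmT_i^{\top}(-\rmS_k)$. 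Hence we can factor $\rmT_i^{\top}$ out on the left of the whole sum, giving $((L^\text{in}_{\mathcal F})^{\top} L^\text{out}_{\mathcal F}(\mathbf X))_i = \rmT_i^{\top}\big(\text{something}\big)$.

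The first claim is then immediate: if $\rmT_i = 0$ then $\rmT_i^{\top} = 0$ and the whole block vanishes, regardless of the value of the bracketed expression. For the second claim, I would track where $\mathbf x_k$ can possibly enter the expression $\sum_{\ell} (L^\text{in}_{\mathcal F})^{\top}_{i\ell}\,(L^\text{out}_{\mathcal F}(\mathbf X))_\ell$. The vector $\mathbf x_k$ appears in $(L^\text{out}_{\mathcal F}(\mathbf X))_\ell$ only when $\ell = k$ (through the diagonal term $\rmS_k^{\top}\rmS_k \mathbf x_k$) or when $\ell \in N(k)$ (through the term $-\rmT_\ell^{\top}\rmS_k \mathbf x_k$); in both of those contributions the coefficient of $\mathbf x_k$ contains a factor $\rmS_k$ on the left of $\mathbf x_k$. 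Setting $\rmS_k = 0$ kills the diagonal contribution from block $k$ and kills the $\mathbf x_k$-part of every neighboring block $(L^\text{out}_{\mathcal F}(\mathbf X))_\ell$ for $\ell\in N(k)$, so no surviving term depends on $\mathbf x_k$. (One should note $k\in N(i)$ so the relevant $\ell$ range does include $k$ and $N(k)$, but the argument does not actually need $k$ to be a neighbor of $i$ — the dependence is killed everywhere.)

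I expect the only mildly delicate point to be bookkeeping: making sure that when expanding $(L^\text{out}_{\mathcal F}(\mathbf X))_\ell = \sum_{j\in N(\ell)}(\rmS_\ell^{\top}\rmS_\ell \mathbf x_\ell - \rmT_\ell^{\top}\rmS_j \mathbf x_j)$ one correctly identifies, for a fixed target variable $\mathbf x_k$, all $(\ell,j)$ pairs that produce an $\mathbf x_k$ term, and checking that each such term indeed has $\rmS_k$ as a left factor of $\mathbf x_k$ — the diagonal term contributes when $\ell=k$ (factor $\rmS_k^{\top}\rmS_k$), and the off-diagonal term contributes when $j=k$, i.e. $k\in N(\ell)$ (factor $-\rmT_\ell^{\top}\rmS_k$). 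There is no real analytic obstacle; the proposition is essentially a consequence of the shared $\rmT_i^{\top}$-left-factor in the in-Laplacian rows and the $\rmS_k$-left-factor of $\mathbf x_k$ in the out-Laplacian. I would present it as a short direct computation rather than invoking anything heavier.
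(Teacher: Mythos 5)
Your proposal is correct and follows essentially the same route as the paper's proof: a direct expansion of the composed operator, observing that every term in the $i$-th block of $(L^\text{in}_{\mathcal F})^{\top}$ carries a left factor $\rmT_i^{\top}$ and that $\mathbf x_k$ only ever enters $L^\text{out}_{\mathcal F}(\mathbf X)$ with $\rmS_k$ immediately to its left. Your block-matrix bookkeeping is a slightly cleaner organization of the same computation the paper writes out as an explicit double sum, and your parenthetical that $k$ need not be a neighbor of $i$ is a true (mild) strengthening.
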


Moreover, our model has the ability to reach longer distances. In most GNNs, if $t$ is the distance between two nodes $i$ and $j$, then they can only communicate after $t$ layers. CSNN enables communication between these nodes after $\lceil t/2 \rceil$ layers.
\begin{proposition}\label{prop:2t-hop}
    In each layer $t$, the features of a node can be affected by the features of nodes up to $2t$-hops.
\end{proposition}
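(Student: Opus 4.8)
The plan is to track how far, in graph distance, a single application of the CSNN update \cref{eq:model} can propagate information, and then to iterate that bound over the $t$ layers. First I would observe that in $\mathbf{X}_{t+1} = (1+\varepsilon)\mathbf{X}_t - \sigma((\Delta^{\text{in}}_{\gF(t)})^{\top}\Delta^{\text{out}}_{\gF(t)}(\mathbf{I}\otimes\mathbf{W}_{1,t})\mathbf{X}_t\mathbf{W}_{2,t})$, the only operator that mixes the stalk-blocks belonging to \emph{different} vertices is the composition $(\Delta^{\text{in}}_{\gF(t)})^{\top}\Delta^{\text{out}}_{\gF(t)}$: the term $(1+\varepsilon)\mathbf{X}_t$ is a block-diagonal rescaling, $\mathbf{I}\otimes\mathbf{W}_{1,t}$ acts within each vertex stalk, $\mathbf{W}_{2,t}$ acts on the feature channels, and $\sigma$ is applied entrywise — none of these change which vertices a given block of the output depends on.

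Next I would pin down the block-sparsity pattern of $(\Delta^{\text{in}}_{\gF(t)})^{\top}\Delta^{\text{out}}_{\gF(t)}$. Since $D_{\text{in}}^{-1/2}$ and $D_{\text{out}}^{-1/2}$ are block-diagonal, this composition has the same block-support as $(L^{\text{in}}_{\gF})^{\top}L^{\text{out}}_{\gF}$, whose value at vertex $i$ is written out explicitly in \cref{eq:composition}. Inspecting that expression, $((L^{\text{in}}_{\gF})^{\top}L^{\text{out}}_{\gF}(\mathbf{X}))_i$ is a linear combination of $\mathbf{x}_i$, of $\mathbf{x}_j$ for $j\in N(i)$, and of $\mathbf{x}_u$ for $u\in N(j)$ with $j\in N(i)$ — i.e.\ of $\mathbf{x}_k$ for exactly those $k$ with graph distance $d(i,k)\le 2$. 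Hence one CSNN layer sends a cochain whose $i$-block depends only on vertices within distance $r$ of $i$ to a cochain whose $i$-block depends only on vertices within distance $r+2$ of $i$.

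I would then conclude by induction on $t$. The base case is $\mathbf{X}_0 = \sigma(X_0W_0+b_0)$, which acts node-wise, so $(\mathbf{X}_0)_i$ depends only on vertex $i$ (distance $0$). For the step, assume $(\mathbf{X}_t)_i$ depends only on $\{(\mathbf{X}_0)_j : d(i,j)\le 2t\}$; applying the one-layer bound, $(\mathbf{X}_{t+1})_i$ depends only on $\{(\mathbf{X}_t)_k : d(i,k)\le 2\}$, and by the triangle inequality each such $(\mathbf{X}_t)_k$ depends only on $\{(\mathbf{X}_0)_j : d(k,j)\le 2t\}\subseteq\{(\mathbf{X}_0)_j : d(i,j)\le 2t+2\}$, which closes the induction at $2(t+1)$. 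To justify the word ``can,'' I would note that for generic choices of the conformal maps $\rmS_i,\rmT_i$ none of the coefficients in \cref{eq:composition} vanish, so the dependence on the $2t$-hop neighbors is genuinely nonzero, with the sharper quantitative sensitivity statement deferred to \cref{prop:distant_node}.

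The bookkeeping here is routine; the only points needing a little care are (i) verifying that it is the Laplacian composition, and nothing else in the layer, that enlarges the receptive field — in particular that normalization by the block-diagonal degree matrices leaves the support unchanged — and (ii) being precise that the claim is an \emph{upper bound} on the receptive field (containment in the $2t$-ball) together with non-degeneracy for generic sheaves, not an assertion that every vertex at distance exactly $2t$ influences node $i$ for every sheaf.
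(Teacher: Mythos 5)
Your proof is correct and follows essentially the same route as the paper's: identify that one application of the composed Laplacian in \cref{eq:composition} reaches $2$-hop neighbors (the inner sum over $u \in N(j)$ for $j \in N(i)$), then induct over layers to get $2t$. You are somewhat more careful than the paper in checking that the other operations in the layer ($(1+\varepsilon)\mathbf{X}_t$, the weight matrices, the normalization, and $\sigma$) do not enlarge the block-support, and in separating the upper bound from the genericity needed to justify ``can,'' but the core argument is the same.
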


We also show in \Cref{prop:distant_node} that CSNNs with $t$ layers are capable of making $i$ and $j$ communicate while ignoring all the other nodes on a path from $i$ to $j$ such that $|j-i| \leq t$. This feature is an asset to handle over-squashing in long range tasks, allowing CSNN to selectively tend to information from distant nodes.
\Cref{ex:oversquashing} illustrates this result in a four-node graph.

\begin{proposition}\label{prop:distant_node}
    Let $i$ and $j$ be nodes at a distance $t$. In CSNN, $i$ can learn to ignore all the $t-1$ nodes in the shortest path from $i$ to $j$ while receiving the information from $j$ in the $t$-layer. Moreover, if we choose a path with $n > t-1$ nodes between $i$ and $j$, then $i$ receives the information from $j$ in the $(n+1)$-layer. 
\end{proposition}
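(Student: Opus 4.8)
The plan is to prove both claims by one explicit construction that carries $\mathbf{x}_j^{(0)}$ to $i$ one edge per layer along a chosen path, while making each intermediate vertex forget its own feature. Write the path as $j=u_0,u_1,\dots,u_m=i$, with $m=t$ (the distance) for the first claim and $m=n+1$ for the second; only the adjacencies $u_k\sim u_{k+1}$ and the distinctness of the $u_k$ matter. Take $\varepsilon\equiv-1$ (so there is no residual term), an activation with $\sigma(0)=0$ and invertible weight matrices $\mathbf{W}_{1,\ell},\mathbf{W}_{2,\ell}$ (for definiteness, $\sigma=\mathrm{id}$ and identities), and at layer $\ell\in\{1,\dots,m\}$ choose the conformal maps so that $u_{\ell-1}$ broadcasts but does not listen ($\rmS_{u_{\ell-1}}\neq0$, $\rmT_{u_{\ell-1}}=0$), $u_\ell$ listens but does not broadcast ($\rmT_{u_\ell}\neq0$, $\rmS_{u_\ell}=0$), and every other vertex---on or off the path---is \texttt{ISOLATE} ($\rmS=\rmT=0$). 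In particular $u_{\ell-1}$ is the \emph{only} broadcasting vertex at layer $\ell$.

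I would then prove, by induction on $\ell$, the invariant: \emph{after layer $\ell$, $\mathbf{x}_{u_\ell}^{(\ell)}$ is a non-constant function of $\mathbf{x}_j^{(0)}$ alone, while $\mathbf{x}_{u_k}^{(\ell)}=\mathbf{0}$ for every path vertex $u_k$ with $k>\ell$.} The second half follows from the first assertion of \cref{prop:satifies_disired_property}: such a $u_k$ has $\rmT_{u_k}=0$ at every layer $\le\ell$, so $(L^\text{in}_{\mathcal{F}})^{\top}L^\text{out}_{\mathcal{F}}$ vanishes at $u_k$ throughout, and the $\varepsilon=-1$ residual removes the leftover $(1+\varepsilon)\mathbf{x}_{u_k}$ term. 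For the first half, evaluating $\bigl((L^\text{in}_{\mathcal{F}})^{\top}L^\text{out}_{\mathcal{F}}\mathbf{X}^{(\ell-1)}\bigr)_{u_\ell}$ via \cref{eq:composition} with the layer-$\ell$ maps: because $\rmS_{u_\ell}=0$ the out-degree Laplacian at $u_\ell$ carries no $\mathbf{x}_{u_\ell}$ term, and because $u_{\ell-1}$ is the unique broadcaster with $\rmT_{u_{\ell-1}}=0$, a short computation collapses everything to the single term $-\bigl(|N(u_\ell)|\,C_{\rmT_{u_\ell}}^2+|N(u_{\ell-1})|\,C_{\rmS_{u_{\ell-1}}}^2\bigr)\,\rmT_{u_\ell}^{\top}\rmS_{u_{\ell-1}}\,\mathbf{x}_{u_{\ell-1}}^{(\ell-1)}$, whose matrix is invertible (a nonzero scalar times a product of invertible conformal maps). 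Using $\mathbf{x}_{u_\ell}^{(\ell-1)}=\mathbf{0}$ (the invariant at step $\ell-1$ when $\ell\ge2$; for $\ell=1$ nothing is needed, since the $\varepsilon=-1$ residual already erases $\mathbf{x}_{u_1}^{(0)}$) together with $\sigma(0)=0$, one gets $\mathbf{x}_{u_\ell}^{(\ell)}=-\sigma\bigl((\mathbf{I}\otimes\mathbf{W}_{1,\ell})\,(\text{invertible matrix})\,\mathbf{x}_{u_{\ell-1}}^{(\ell-1)}\,\mathbf{W}_{2,\ell}\bigr)$, which by the invariant at step $\ell-1$ (resp.\ by $\mathbf{x}_{u_0}^{(0)}=\mathbf{x}_j^{(0)}$ for $\ell=1$) is a non-constant function of $\mathbf{x}_j^{(0)}$ and of nothing else. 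Taking $\ell=m$ gives the proposition: $\mathbf{x}_i^{(m)}=\mathbf{x}_{u_m}^{(m)}$ depends non-trivially on $\mathbf{x}_j^{(0)}$ ($i$ ``receives the information from $j$'') but on no intermediate $\mathbf{x}_{u_k}^{(0)}$, $0<k<m$ ($i$ ``ignores'' the $t-1$, resp.\ $n$, intermediate vertices); and since $u_m$ is \texttt{ISOLATE} at all layers $<m$, $\mathbf{x}_i^{(\ell)}=(1+\varepsilon_i)^{\ell}\mathbf{x}_i^{(0)}$ is independent of $\mathbf{x}_j^{(0)}$ for $\ell<m$, so the information arrives exactly at layer $m$ ($=t$, resp.\ $=n+1$).

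The step I expect to be delicate is compatibility with the \emph{normalized} operators $\Delta^\text{out}_{\mathcal{F}},(\Delta^\text{in}_{\mathcal{F}})^{\top}$: a non-broadcasting vertex $u_\ell$ has a vanishing out-degree block $(D_\text{out})_{u_\ell u_\ell}=\sum\rmS_{u_\ell}^{\top}\rmS_{u_\ell}=0$, so the normalization is singular exactly where the construction needs it. I would resolve this either (i) by carrying out the argument with the \emph{unnormalized} Laplacians of \cref{def:laplacians}---all the cancellations above are about \emph{which} $\mathbf{x}_k$ appear and are untouched by normalization---or (ii) by replacing each ``$\rmS_{u_\ell}=0$'' with $\rmS_{u_\ell}=\delta_\ell Q_{u_\ell}$ ($Q_{u_\ell}$ orthogonal) and letting $\delta_\ell\to0$, verifying that the coefficient of $\mathbf{x}_j^{(0)}$ in $\mathbf{x}_i^{(m)}$ stays bounded away from $0$ while the coefficients of the intermediate initial features are $O(\delta_\ell)$ and vanish in the limit. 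The remaining points are routine: the displayed matrix is still nonzero after normalization and multiplication by invertible weights, and---taking $\sigma=\mathrm{id}$ for definiteness, though a generic choice of weights makes it true for any non-constant $\sigma$---the activation does not annihilate the propagated signal.
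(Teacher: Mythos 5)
Your construction is essentially the paper's own proof: a relay along the chosen path in which, at layer $\ell$, only $u_{\ell-1}$ broadcasts and only $u_\ell$ listens, with every other vertex isolated, so that by \cref{eq:composition} the only nonzero update at each layer carries $\mathbf{x}_j^{(0)}$ exactly one edge closer to $i$. You are in fact more careful than the paper on two points it glosses over --- killing the residual term via $\varepsilon=-1$ so that the intermediate vertices' initial features do not leak into the relayed signal, and the singularity of $D_{\text{out}}^{-\frac{1}{2}}$ at non-broadcasting vertices --- and both of your proposed fixes (working with the unnormalized Laplacians of \cref{def:laplacians}, or a limiting argument in the conformal scalars) are sound.
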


\begin{example}
\label{ex:oversquashing}
    Consider a directed graph with vertex set $V=\{1,2,3,4\}$ and edge set $E=\{(1,2),$ $(2,3), (3, 4) ,(4, 3),(3, 2),(2, 1)\}$. We follow the proof of \Cref{prop:distant_node} to show we can propagate a message from  node $4$ to node $1$, while the latter ignores all remaining nodes. We denote the target/source map of node $i$ at layer $t$ by $\rmT_{i,t}$ and $\rmS_{i,t}$ respectively. To achieve our desired result:
    \begin{enumerate}[itemsep=0pt,topsep=0pt,left=8pt]
        \item[a.] \underline{In the first layer}, must have zero source and target maps except for $\rmT_{3,1}$ and $\rmS_{4,1}$. A simple verification gives that $((L^\text{in}_{\mathcal{F}})^{\top} L^\text{out}_{\mathcal{F}}(\mathbf{X}))_{k} = 0$, for all $k \neq 3$. So $((L^\text{in}_{\mathcal{F}})^{\top} L^\text{out}_{\mathcal{F}}(\mathbf{X}))_k = 0$,  for all $k \neq 3$. If $k=3$, then  $((L^\text{in}_{\mathcal{F}})^{\top} L^\text{out}_{\mathcal{F}}(\mathbf{X}))_3  = -2\rmT_{4,1}^{\top}\rmS_{4,1}\mathbf{x}_4^{(0)}, $ with $\mathbf{x}_k^{(t)}$  denoting the feature vector of $k$ at layer $t$, thus $\mathbf{x}_4^{(0)}$ will be the only feature vector to influence $\mathbf{x}_3^{(1)}$;
        \item[b.]  \underline{In the second layer}, we must have that all source and target maps are zero except for $\rmT_{2,2}$ and $\rmS_{3,2}$. Then   $((L^\text{in}_{\mathcal{F}})^{\top} L^\text{out}_{\mathcal{F}}(\mathbf{X}))_k = 0$,  for all $k \neq 2$ and  $((L^\text{in}_{\mathcal{F}})^{\top} L^\text{out}_{\mathcal{F}}(\mathbf{X}))_2  = -2\rmT_{2,2}^{\top}\rmS_{3,2}\mathbf{x}_3^{(1)}$. Thus $\mathbf{x}_3^{(1)}$ will be the only feature vector to influence $\mathbf{x}_2^{(2)}$;
        \item[c.] \underline{In the third layer}, all source and target maps must be zero except for $\rmT_{1,3}$ and $\rmS_{2,3}$. Then   $((L^\text{in}_{\mathcal{F}})^{\top} L^\text{out}_{\mathcal{F}}(\mathbf{X}))_k = 0$,  for all $k \neq 1$ and  $((L^\text{in}_{\mathcal{F}})^{\top} L^\text{out}_{\mathcal{F}}(\mathbf{X}))_1  = -2\rmT_{1,3}^{\top}\rmS_{2,3}\mathbf{x}_2^{(3)}$. Thus $\mathbf{x}_1^{(3)}$  is
        influenced only by      
        $\mathbf{x}_2^{(2)}$, which was influenced only by $\mathbf{x}_3^{(1)}$, which is influenced only by $\mathbf{x}_4^{(0)}$. 
    \end{enumerate}

Consequently, $\mathbf{x}_1^{(3)}$ is affected by $\mathbf{x}_4^{(0)}$ while ignoring the features of all other nodes in all other layers.
    
This configuration shows that although CSNN can achieve $2$-hop neighbors, it can also refrain from this behavior to only  access $1$-hop neighbors per layer. Moreover, this flexibility indicates there are multiple forms to stablish communication between two distant nodes while ignoring others. 
        
        \begin{figure}[htbp]
            \centering
            \includegraphics[width=.6\linewidth]{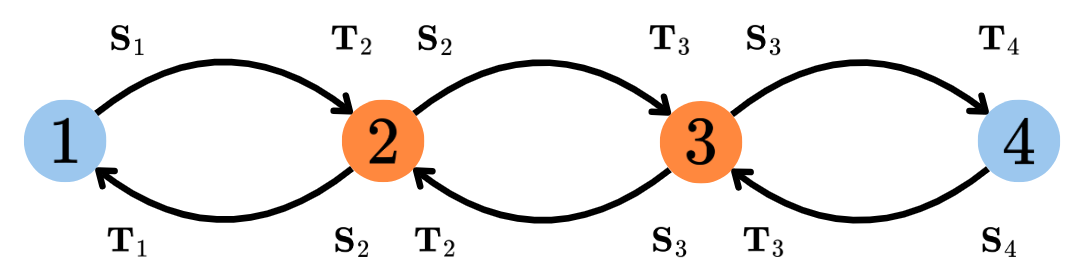}
            \caption{Illustration of \Cref{ex:oversquashing}. At layer $t$, we consider that all maps but $\rmT_{4 - t, t}$ and $\rmS_{4 - (t - 1), t}$ are $0$, enabling the flow of information from right to left following the bottom edges.}
            \label{fig:enter-label}
            \vspace{-15pt}
        \end{figure}
\end{example}

Observe that the derivative of $\mathbf{x}_1^{(3)}$ in relation to $\mathbf{x}_4^{(0)}$ can be as high as the values of the non-zero $\rmT_i$ and $\rmS_i$ permit. This shows our model can mitigate over-squashing, which refers to the failure of an information propagating to distance nodes. \citet{di2023over} and \citet{toppingunderstanding}, studied over-squashing in message passing neural networks through a bound on the Jacobian   $\left|\nicefrac{\partial \mathbf{x}_i^{(t)}}{\partial\mathbf{x}_j^{(0)}} \right| \leq  c^{t}\hat{A}^t_{ij},$ where $t$ is the layer, $c$ is a constant that depends on the architecture of the model, and $\hat{A}$ the normalized adjacency matrix. Moreover, over-squashing occurs when we have a small derivative $\nicefrac{\partial \mathbf{x}_i^{(t)}}{\partial\mathbf{x}_j^{(0)}}$, since it means that after $t$ layers, the feature at $i$ is mostly insensitive to the information initially contained
at $j$, i.e., the information was not propagated properly. \Cref{prop:distant_node} states that the feature at node $i$ can be sensitive to the information initially contained 
at node $j$, independently of the distance, given enough layers and an appropriate configuration of restriction maps. Consequently, \Cref{prop:distant_node} suggests that the restriction maps regulating the sensibility between distant nodes can provide higher upper bounds to $\nicefrac{\partial \mathbf{x}_i^{(t)}}{\partial\mathbf{x}_j^{(0)}}$ while decreasing the value of $\nicefrac{\partial \mathbf{x}_i^{(t)}}{\partial\mathbf{x}_k^{(0)}}$ for other nodes $k$. 

\section{Related works}
\label{sec:discussion}

\textbf{Cooperative GNNs.}  \citet{pmlr-v235-finkelshtein24a} were the first to propose flexibilizing message passing by allowing nodes to choose how they cooperate with each other. Each layer of their model, CO-GNN, employs an additional GNN that chooses an action for each node. While CO-GNNs can be employed with arbitrary base and action networks, their main caveat is that training can become increasingly difficult as these networks become more complex --- the grid of hyper-parameters grows considerably and the stochastic nature of the action network may affect model selection. Different from CO-GNN, our CSNN does not rely on discrete actions and can smoothly modulate between cooperative behavior patterns. 
\looseness=-1

\textbf{Sheaf Neural Networks.} Besides works on SNNs for graph data with real-valued node features, recent works have expanded the literature to accommodate heterogeneous edge types ~\citep{heterogenoussheaf}, hypergraphs~\citep{NEURIPS2023sheafhypergraph}, nonlinear Sheaf Laplacians ~\citep{zaghen2024sheaf},  and node features living on Riemann manifolds ~\citep{sheavesmanifold}. While recent works on SNNs learn restriction maps in an end-to-end fashion, there are also prior works in which they are manually constructed ~\citep{hansen2019toward} or computed as a pre-processing step~\citep{barbero2022sheaf}. \looseness=-1

\textbf{Quiver Laplacians.} \citet{sumray2024quiver} propose sheaf Laplacians over quivers (directed graphs w/ self-loops) to improve  feature selection on tabular data, with no learning component involved.  The in- and out-Laplacians we defined here are not particular cases of the Laplacians over quivers. The former are positive semi-definite matrices, while our Laplacians may have complex eigenvalues with negative real parts. \looseness=-1

\section{Experiments}
\label{sec:experiments}

We provide both synthetic and 
real-world experiments to evaluate the performance of CSNN, including node- and graph-level prediction tasks. 
\Cref{exp:oversquashing} assess CSNN's capacity to circumvent over-squashing using the NeighborsMatch benchmark proposed by \cite{alon2020bottleneck}.
\Cref{exp:node} presents experiments on eleven node classification tasks, showcasing the effectiveness of CSNN for heterophilic graphs. 
Finally, \Cref{exp:graph} consider the Peptides datasets from the Long Range Graph Benchmark \citep{dwivedi2022long} to substantiate the capability of our model to mitigate under-reaching and over-squashing on real-world graph classification. We also provide additional experiments in \Cref{app:add_exp}.

\subsection{Over-squashing}
\label{exp:oversquashing}

In order to verify our theoretical results on the capacity of CSNN to alleviate over-squashing, we reproduce the NeighborsMatch problem proposed by \cite{alon2020bottleneck}, using the same framework. The datasets consist of binary trees of depth $r$, with the root node as the target, the leaves containing its possible labels, and the leaf with the same number of neighbors as the target node containing its true label. We provide the parameters used for this task in \Cref{app:exp}.

\Cref{fig:tree-by-depth} shows GCN \citep{kipf2016semi} and GIN \citep{xu2018how} fail to fit the datasets starting from $r=4$ and GAT \citep{velivckovic2017graph} and GGNN \citep{ggnn} fail to fit the datasets starting from $r=5$. These models suffer from over-squashing and are not able to distinguish between different training examples, while the CSNN model reaches perfect accuracy for all tested $r$.

\citet{alon2020bottleneck} argue that the difference in performance for the GNNs are related to how node features are updated: on one hand, GCN and GIN aggregate all neighbor information before combining it with the representation of the target node, forcing them to compress all incoming information into a single vector. On the other hand, GAT uses attention to selectively weigh messages based on the representation of the target node, allowing it (to some degree) to filter out irrelevant edges and only compress information from a subset of the neighbors. So models like GAT (and GGNN) that compress less information per step can handle higher $r$ better than GCN and GIN.

\begin{wrapfigure}[13]{r}{0.5\textwidth}
\vspace*{-5mm}
\includegraphics[width=0.95\linewidth]{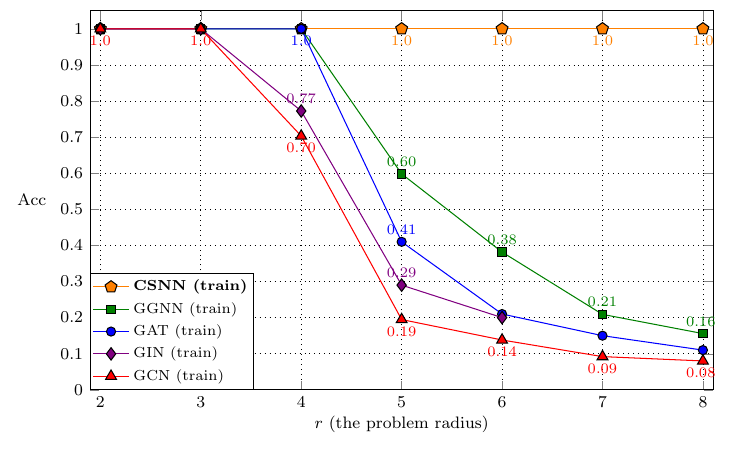}
\vspace*{-5mm}
\caption{Accuracy for increasing tree depths in the NeighborsMatch task. CSNN consistently achieves 100\% accuracy for all values of $r$.}
\label{fig:tree-by-depth}
\vspace*{-5mm}
\end{wrapfigure} 

This experiment shows that CSNN is more efficient in ignoring irrelevant nodes and can avoid loosing relevant information. Moreover, \Cref{prop:distant_node}  provides theoretical support for this result, as it states that there are choices of parameters for which the model can listen \textit{only} to the nodes along a path between distant nodes $i$ and $j$, enabling selective communication to diminish noise impact.
\looseness=-1

\textbf{Comparison against other sheaf models.} Notably, CSNN outperforms other sheaf methods in this task. BuNN reports $100\%$ accuracy until $r =6$. Then it drops to $71\%$ and $42\%$ for $r=7$ and $r=8$, respectively, as reported in \cite{bamberger2024bundle}. For NSD with orthogonal maps, we obtained $100\%$ accuracy when $r = 2$,  $91\%$ for $r = 3$, and then a sharp drop to $5\%$ when $r = 4$.

\subsection{Node Classification}
\label{exp:node}

\textbf{Datasets.} We evaluate our model on the five recently proposed heterophilic graphs from \cite{platonov2023critical}, and also on six classical ones for which benchmarking results can be found in \citet{pei2020geom, rozemberczki2021multi, tang2009social}. As pointed out in \cite{platonov2023critical}, the datasets Squirrel and Chameleon have many duplicate nodes, which may lead to data leakage. Following their guidelines, we use their cleaned version of these datasets to ensure a meaningful evaluation.
For binary classification datasets, we report AUROC, while for multiclass datasets we report accuracy. 
\looseness=-1

\begin{table}[tbh]
\centering
\caption{Performance comparison on datasets from \cite{platonov2023critical}. AUROC is reported for minesweeper, tolokers and questions, accuracy is reported for the remaining datasets. CSNN is the best-performing method in 6 out of 7 datasets.}
\vspace{2pt}
\label{table:nodeclass1}
\resizebox{\textwidth}{!}{
\begin{tabular}{lccccccc}
\toprule
Model & roman-empire & amazon-ratings & minesweeper & tolokers & questions & squirrel & chameleon \\
Edge Homophily & {0.05} & {0.38}  & {0.68} & {0.59} & {0.84} & {0.20} & {0.23}\\
\midrule
GCN & 73.69 ± 0.74 & 48.70 ± 0.63 & 89.75 ± 0.52 & 83.64 ± 0.67 & 76.09 ± 1.27 & \textbf{\textcolor{gray}{39.47 ± 1.47}} & 40.89 ± 4.12 \\

SAGE & 85.74 ± 0.67 & 53.63 ± 0.39 & 93.51 ± 0.57 & 82.43 ± 0.44 & 76.44 ± 0.62 & 36.09 ± 1.99 & 37.77 ± 4.14  \\

GAT & 80.87 ± 0.30 & 49.09 ± 0.63 & 92.01 ± 0.68 & 83.70 ± 0.47 & 77.43 ± 1.20 & 35.62 ± 2.06 & 39.21 ± 3.08\\

GAT-sep & 88.75 ± 0.41 & 52.70 ± 0.62 & 93.91 ± 0.35 & 83.78 ± 0.43 & 76.79 ± 0.71 & 35.46 ± 3.10 & 39.26 ± 2.50 \\
GT & 86.51 ± 0.73 & 51.17 ± 0.66 & 91.85 ± 0.76 & 83.23 ± 0.64 & 77.95 ± 0.68 & 36.30 ± 1.98 & 38.87 ± 3.66 \\
GT-sep & 87.32 ± 0.39 & 52.18 ± 0.80 & 92.29 ± 0.47 & 82.52 ± 0.92 & 78.05 ± 0.93 & 36.66 ± 1.63 & 40.31 ± 3.01  \\
\midrule
CO-GNN & 89.44 ± 0.50 & \textbf{54.20 ± 0.34} & 97.35 ± 0.63 & \textbf{\textcolor{gray}{84.84 ± 0.96}} & 75.97 ± 0.89 & 39.39 ± 2.76 & \textbf{\textcolor{gray}{41.14 ± 5.40}} \\
O(d)-NSD & 80.41 ± 0.72 & 42.76 ± 0.54 & 92.15 ± 0.84 & 78.83 ± 0.76 & 69.69 ± 1.46 & 35.79 ± 3.34 & 37.93 ± 2.24 \\
BuNN & \textbf{\textcolor{gray}{91.75 ± 0.39}} & \textbf{\textcolor{gray}{53.74 ± 0.51}} & \textbf{\textcolor{gray}{98.99 ± 0.16}} & 84.78 ± 0.80 & \textbf{\textcolor{gray}{78.75 ± 1.09}} & - & - \\
\midrule
CSNN & \textbf{92.63 ± 0.50} & 52.07 ± 1.00 & \textbf{99.07 ± 0.25} & \textbf{85.45 ± 0.53} & \textbf{79.31 ± 1.22} & \textbf{41.18 ± 2.23} & \textbf{43.09 ± 3.17}
\\
\bottomrule
\end{tabular}
}
\end{table}

\begin{table}[bth]
\centering
\caption{Accuracy for node classification datasets on the fixed splits of \cite{pei2020geom}. CSNN achieves the best results in 3 out of 4 datasets.}
\label{table:nodeclass2}
\resizebox{0.65\textwidth}{!}{
\begin{tabular}{lccccc}
\toprule
Model & Texas & Wisconsin & Film & Cornell \\
{ Edge Homophily} & {0.11} & {0.21} & {0.22} & {0.30} \\
\midrule
GGCN & 84.86 ± 4.55 & 86.86 ± 3.29 & 37.54 ± 1.56 & 85.68 ± 6.63 & \\

H2GCN & 84.86 ± 7.23 & 87.65 ± 4.98 & 35.70 ± 1.00 & 82.70 ± 5.28 \\

GPRGNN & 78.38 ± 4.36 & 82.94 { ± 4.21} & 34.63 { ± 1.22} & 80.27 { ± 8.11} \\

FAGCN & 82.43 { ± 6.89} & 82.94 { ± 7.95} & 34.87 { ± 1.25} & 79.19 { ± 9.79} \\

MixHop & 77.84 { ± 7.73} & 75.88 { ± 4.90} & 32.22 { ± 2.34} & 73.51 { ± 6.34} \\

GCNII & 77.57 { ± 3.83} & 80.39 { ± 3.40} & 37.44 { ± 1.30} & 77.86 { ± 3.79} \\

Geom-GCN & 66.76 { ± 2.72} & 64.51 { ± 3.66} & 31.59 { ± 1.15} & 60.54 { ± 3.67} \\

PairNorm & 60.27 { ± 4.34} & 48.43 { ± 6.14} & 27.40 { ± 1.24} & 58.92 { ± 3.15} \\

GraphSAGE & 82.43 { ± 6.14} & 81.18 { ± 5.56} & 34.23 { ± 0.99} & 75.95 { ± 5.01} \\

GCN & 55.14 { ± 5.16} & 51.76 { ± 3.06} & 27.32 { ± 1.10} & 60.54 { ± 5.30} \\

GAT & 52.16 { ± 6.63} & 49.41 { ± 4.09} & 27.44 { ± 0.89} & 61.89 { ± 5.05} \\

MLP & 80.81 { ± 4.75} & 85.29 { ± 3.31} & 36.53 { ± 0.70} & 81.89 { ± 6.40} \\
{FSGNN} & \textcolor{gray}{87.57 ± 4.86} & {87.65 ± 3.51} & {35.62 ± 0.87} & {\textbf{87.30 ± 4.53}} \\
{GloGNN} & {84.32 ± 4.15} & {87.06 ± 3.53} & {37.35 ± 1.30} & {83.51 ± 4.26} \\
{ACMGCN} & \textbf{{87.84 ± 4.40}} & {88.43 ± 3.22} & {36.28 ± 1.09} & {85.14 ± 6.07} \\ 
\midrule
{CO-GNN} & {77.57 {± 5.41}} & {83.73 {± 4.03}}  & {36.26 {± 3.74}} & {72.70 {± 5.47}}  \\ 
\midrule
Diag-NSD & {85.67} { ± 6.95} & {88.63} { ± 2.75} & {37.79} { ± 1.01} & \textbf{\textcolor{gray}{86.49 ± 7.35}} \\

O(d)-NSD & 85.95 ± 5.51 & \textbf{\textcolor{gray}{89.41 ± 4.74}} & \textbf{\textcolor{gray}{37.81 ± 1.15}} & {84.86} { ± 4.71} \\

Gen-NSD & 82.97 ± 5.13 & 89.21 ± 3.84 & 37.80 ± 1.22 & 85.68 {± 6.51} \\

\midrule
CSNN & {87.30 ± 5.93} & \textbf{90.00 ± 2.83} & \textbf{38.03 ± 1.12} & 81.62 ± 4.32 \\

\bottomrule
\end{tabular}
}
\end{table}

\textbf{Experimental setting.} As baselines for benchmarks in \cite{platonov2023critical}, we use GCN \citep{kipf2016semi}, GraphSAGE \citep{hamilton2017inductive}, GAT \citep{velivckovic2017graph} and GT \citep{shi2020masked}, together with the variations GAT-sep and GT-sep, which concatenate the representation of a node to the mean of its neighbors instead of summing them \citep{zhu2020beyond}. These are all classical baselines used in \cite{platonov2023critical} to compare against GNN architectures specifically developed for heterophilic settings, and that achieve the best performance in most cases. We also compare CSNN against recent models such as CO-GNN, NSD, and BuNN \citep{bamberger2024bundle}. Results for BuNN on Squirrel, Chameleon, and the datasets in \autoref{table:nodeclass2} are not available, since we do not have access to their code.\looseness=-1

For the remaining datasets, we 
%
%
compare against the classical GCN, GAT and SAGE; the models specifically tailored for heterophilic data GGCN \citep{yan2022two}, Geom-GCN \citep{pei2020geom}, H2GCN \citep{zhu2020beyond}, GPRGNN \citep{chien2020adaptive}, FAGCN \citep{bo2021beyond}, FSGNN \citep{maurya2022simplifying}, GloGNN \cite{li2022finding}, ACMGCN \citep{luan2022revisiting}, and MixHop \citep{abu2019mixhop}; and the models GCNII \citep{chen2020simple} and PairNorm \citep{zhao2019pairnorm} designed to alleviate oversmoothing. We use the 10 fixed splits proposed by \cite{platonov2023critical} and \cite{pei2020geom}. We refer to \Cref{app:exp} for further implementation details.

\textbf{Results.} \Cref{table:nodeclass1} and \Cref{table:nodeclass2} show that CSNN is the best-performing method in 9 out of 11 datasets. These results highlight our model's capacity to deal with heterophilic graphs of different sizes and heterophily levels. We note CSNN often  outperforms both NSD and CO-GNN in \Cref{table:nodeclass1}. While we report the results in \Cref{table:nodeclass2} for completeness, we note they exhibit high variance --- in accordance to the findings of \cite{platonov2023critical}, which highlight that the small scale of these datasets may incur unstable and statistically insignificant results. 
\looseness=-1

\subsection{Graph Classification}

\label{exp:graph}
\begin{wraptable}[13]{r}{0.5\linewidth}
\centering
\vspace{-12pt}
\caption{Performance comparison of models on the peptides datasets.}
\vspace{-8pt}
\label{table:peptides}
\small
\begin{tabular}{lcc}
\toprule
Model & peptides-func $\uparrow$ & peptides-struct $\downarrow$ \\
\midrule
GCN & 68.60 ± 0.50 & 24.60 ± 0.07 \\
GINE & 66.21 ± 0.67 & 24.73 ± 0.17 \\
GatedGCN & 67.65 ± 0.47 & 24.77 ± 0.09 \\
DReW & 71.50 ± 0.44 & 25.36 ± 0.15 \\
SAN & 64.39 ± 0.75 & 25.45 ± 0.12 \\
GPS & 65.34 ± 0.91 & 25.09 ± 0.14 \\
G-ViT & 69.42 ± 0.75 & \textbf{\textcolor{gray}{24.49 ± 0.16}} \\
Exphormer & 65.27 ± 0.43 & 24.81 ± 0.07 \\
BuNN & \textbf{72.76 ± 0.65} & 24.63 ± 0.12 \\
CSNN & \textbf{\textcolor{gray}{71.58 ± 0.80}} & \textbf{24.32 ± 0.04} \\
\bottomrule
\end{tabular}
\end{wraptable}

To assess the effectiveness of CSNNs on long-range tasks, we evaluate it on the peptides dataset from the Long Range Graph Benchmark \cite{dwivedi2022long}. It is a dataset containing ~15k graphs and two different tasks: peptides-func is a graph classification task, while peptides-struct is a regression one. We report average precision (AP) for peptides-func and mean absolute error (MAE) for peptides-struct. \looseness=-1

\textbf{Setup.} We follow the experimental setup of \citet{tonshoff2023did}, and tune the network hyperparameters keeping the $\sim\!\!500k$ parameter budget proposed by \citet{dwivedi2022long} for fair comparison. We run the model using four different seeds and report mean and standard deviation of the evaluation metrics. The baselines are taken from \citet{tonshoff2023did} and we also compare against the results reported for BuNN by \citet{bamberger2024bundle}.

\textbf{Results.} Our model achieves the best performance in the peptides-struct dataset, and second-best in the peptides-func, as shown in \Cref{table:peptides}. 
These results further strengthen our claims on the capacity of CSNNs to mitigate over-squashing and perform better in scenarios where long-range and under-reaching are known issues.

\section{Conclusion}
\label{sec:discussion}

This work proposed Cooperative Sheaf Neural Networks, a novel SNN architecture that incorporates directionality  in order to increase its efficiency by learning sheaves with conformal maps, allowing nodes to choose the optimal behavior in terms of information propagation with respect to its neighbors. We provided theoretical insights on how CSNN can alleviate over-squashing due to its capacity to smoothly modulate node behavior in information diffusion. We also validated its effectiveness on node and graph classification experiments on heterophilic graphs and long-range tasks.

\textbf{Limitations and Future Works.} While CSNN is not computationally more taxing than other SNNs, it is worth pointing that developing strategies to scale sheaf-based networks is a major research challenge. While we have used conformal maps to reduce the parameter complexity of restriction maps, we leave open the possibility that there are further ways to improve the scalability of CSNN. We also believe that efficient message-passing implementations could represent a step towards large-scale SNNs.
\looseness=-1

Another promising direction for future works is extending SNNs to cope with high-order structures like cell- and simplicial complexes, possibly allowing for more expressive models and promoting long-range communication with fewer layers. 

\section{Ethics and Reproducibility Statements}
\textbf{Ethics Statement.} We do not foresee immediate negative societal or ethical impacts at this stage of the work.

\textbf{Reproducibility Statement.} Aiming to secure reproducibility of our work, we provide proofs of our theoretical results and in experiment detail in \cref{app:proofs} and \cref{app:exp}. Moreover, we will provide a public code once the review process is complete.

\bibliographystyle{plainnat}
\bibliography{references}

\appendix

\section{Proofs} \label{app:proofs}

\subsection{Proof of Proposition \ref{thm:snns_coop}}
\begin{proof}
Suppose $L_{\mathcal{F}}(\mathbf{X})_i$ does not depend of $\mathbf{x}_j$ for any $j$ neighbor of $i$. Since 
$L_{\mathcal{F}}(\mathbf{X})_i=\sum_{i, j \unlhd e} \rmapf{i}{e}^{\top}\left(\rmapf{i}{e} \mathbf{x}_i-\rmapf{j}{e} \mathbf{x}_j\right)$, this means $\rmapf{i}{e}^{\top}\rmapf{j}{e} \mathbf{x}_j = 0$. Therefore, $\rmapf{j}{e} \mathbf{x}_j \in \ker(\rmapf{i}{e}^{\top})$, for any $j$ neighbor of $i$. Thus, $\rmapf{i}{e} \mathbf{x}_i = 0$ or $\rmapf{j}{e} \mathbf{x}_j = 0$, for every $j$.

Note that, $\rmapf{i}{e} \mathbf{x}_i = 0$ implies that $L_{\mathcal{F}}(\mathbf{X})_j = \sum_{j, i \unlhd e} \rmapf{j}{e}^{\top}\rmapf{j}{e} \mathbf{x}_j$.

If $\rmapf{j}{e} \mathbf{x}_j = 0$ for every $j$, then $L_{\mathcal{F}}(\mathbf{X})_j = 0$.
\end{proof}

\subsection{Remark regarding  Proposition \ref{thm:snns_coop} and nonlinear Sheaf Laplacian}

If the sheaf Laplacian is nonlinear as in \cite{zaghen2024sheaf}, i.e, $L_{\mathcal{F}}(\mathbf{X})_i=\sum_{i, j \unlhd e} \rmapf{i}{e}^{\top}\phi_e\left(\rmapf{i}{e} \mathbf{x}_i-\rmapf{j}{e} \mathbf{x}_j\right)$, where $\phi_e : \gF(e) \to \gF(e)$ is a continuous function for each edge $e$, then saying that $L_{\mathcal{F}}(\mathbf{X})_i$ does not depend of $\mathbf{x}_j$ means  $\rmapf{i}{e}^{\top}\phi_e\rmapf{j}{e} \mathbf{x}_j = 0$. Then the proof holds similarly to the above.

\subsection{Proof of Proposition \ref{prop:satifies_disired_property}}

\begin{proof}
We have that $(L^\text{in}_{\mathcal{F}})^{\top} L^\text{out}$ valued at a given vertex $i$ is: 
\begin{equation}\label{eq:composition}
\begin{split}
 \sum_{j \in N(i)}\left( \rmT_i^{\top}\rmT_i \left(\sum_{j \in N(i)}\left(  \rmS_i^{\top}\rmS_i \mathbf{x}_i- \rmT_i^{\top}\rmS_j\mathbf{x}_j\right)\right)- \rmT_i^{\top}\rmS_j \left(\sum_{u \in N(j)}\left(  \rmS_j^{\top}\rmS_j \mathbf{x}_j- \rmT_j^{\top}\rmS_u\mathbf{x}_u\right) \right)\right)
\end{split}
\end{equation}

So $\rmT_i = 0$ (i.e. $i$ does not listen) implies $((L^\text{in}_{\mathcal{F}})^{\top} L^\text{out}_{\mathcal{F}}(\mathbf{X}))_i  = 0$. If $i$ listens, but a certain neighbor $k$ does not broadcast, i.e., $\rmT_k = 0$, then $((L^\text{in}_{\mathcal{F}})^{\top} L^\text{out}_{\mathcal{F}}(\mathbf{X}))_i$ is
\begin{equation*}
\begin{split}
 \sum_{j \in N(i)\setminus k}\left( \rmT_i^{\top}\rmT_i \left(\sum_{j \in N(i)\setminus k}\left(  \rmS_i^{\top}\rmS_i \mathbf{x}_i- \rmT_i^{\top}\rmS_j\mathbf{x}_j\right)\right)- \rmT_i^{\top}\rmS_j \left(\sum_{u \in N(j)\setminus k}\left(  \rmS_j^{\top}\rmS_j \mathbf{x}_j- \rmT_j^{\top}\rmS_u\mathbf{x}_u\right) \right)\right)
\end{split}
\end{equation*}
Since the sum does not go through the index $k$, $\mathbf{x}_k$ is not a component in $((L^\text{in}_{\mathcal{F}})^{\top} L^\text{out}_{\mathcal{F}}(\mathbf{X}))_i$. 
\end{proof}

\subsection{Proof of Proposition \ref{prop:2t-hop} }
\begin{proof}
Let $t = 1$, and fix a node $i$. Then we are essentially just using the composition described in \Cref{eq:composition} (up to normalization and learnable weights). In the equation we have a sum running over all neighbors $j$ of $i$ and another sum running over all neighbors $u$ of each $j$. So $u$ can be a $2$-hop neighbor of $i$ and we have that $i$ was updated with information from up to $2$-hop neighbors.  Similarly, the node $u$ is updated by up to $2$-hop neighbors. Therefore, in the second layer $t=2$, $i$  was updated with information from up to $4$-hop neighbors.

If $t = n$, assume by induction that each node $i$ receives information from it $2n$-hop neighbors. In the next layer $n+1$, $i$ will be updated by its $n$-update of its $2$-hop neighbors. Let $j$ be a node in the  $2$-hop neighborhood of $i$. By the inductive hypothesis, $j$ receives information from its $2n$-hop neighbors, whose distance to $i$ is up to $2n + 2 = 2(n+1)$, concluding the proof by induction.
\end{proof}

\subsection{Proof of Proposition \ref{prop:distant_node}}

\begin{proof}

     Choose a path from $i$ to $j$. So there are $t-1$ vertices between $i$ an $j$, say $v_1,..., v_{t-1}$. In the first layer, let $\rmS_{j}$ and $\rmT_{v_{t-1}}$ be different of zero and  all other source and target maps equal zero.

    This results in $((\Delta_{\gF}^{in})^{\top}\Delta_{\gF}^{out})_k = 0$  for every $k \neq v_{t-1}$ and $((\Delta_{\gF}^{in})^{\top}\Delta_{\gF}^{out})_{v_{t-1}}$ depends only of $x_j$. So, except for $x_{v_{t-1}}$, the values $x_k$ are not updated.

    In the second layer, let $\rmS_{v_{t-1}}$ and $\rmT_{v_{t-2}}$ different of zero, and all other maps equal zero. This results in $((\Delta_{\gF}^{in})^{\top}\Delta_{\gF}^{out})_k = 0$  for every $k \neq  v_{t-1}$, where $((\Delta_{\gF}^{in})^{\top}\Delta_{\gF}^{out})_{v_{t-1}}$  depends only of the $x^{(1)}_{v_{t-1}}$ that was updated in the previous layer and depends only of $x_j$. 
    
    We continue this reasoning until the $t$-layer, in which we make  $\rmS_{v_1}$ and $\rmT_{i}$ different of zero, and all other source maps equal zero. This results in $((\Delta_{\gF}^{in})^{\top}\Delta_{\gF}^{out})_k = 0$  for every $k \neq i$, where $((\Delta_{\gF}^{in})^{\top}\Delta_{\gF}^{out})_{v_{1}}$ depend only of the $x^{(t-1)}_{v_1}$, which going backwards depends only of the original $x_j,$ up to transformations given by the target and source maps.
\end{proof}

\section{Additional Experiments}\label{app:add_exp}
In this section, we provide other two experiments. The first is a real-world large dataset and the second is a noisy implementation of \Cref{ex:oversquashing}, to illustrate how \Cref{prop:distant_node} may work in practice. 

\begin{table}[bth]
\centering
\caption{Results on the Penn94 dataset.}
\begin{tabular}{lc}
\toprule
\textbf{Models} & Penn94 \\
Edge Homophily & 0.47 \\
\midrule
MLP        & 73.61 $\pm$ 0.40 \\
GCN        & 82.47 $\pm$ 0.27 \\
GAT        & 81.53 $\pm$ 0.55 \\
MixHop     & 83.47 $\pm$ 0.71 \\
GCNII      & 82.92 $\pm$ 0.59 \\
H2GCN      & 81.31 $\pm$ 0.60 \\
WRGAT      & 74.32 $\pm$ 0.53 \\
GPR-GNN    & 81.38 $\pm$ 0.16 \\
ACM-GCN    & 82.52 $\pm$ 0.96 \\
LINKX      & 84.71 $\pm$ 0.52 \\
GloGNN     & \textcolor{gray}{85.57 $\pm$ 0.35} \\
\midrule
CSNN & \textbf{86.00 $\pm$ 0.39} \\
\bottomrule
\end{tabular}
\end{table}

\textbf{Real-world.} We run CSNN on the Penn94, a dataset with $ 41.554$ nodes, $1.362.229$ edges, and whose feature dimension is $5$, against multiple baselines as reported in \cite{li2022finding} to illustrate CSNN's performance in a larger \emph{ambiguous} heterophilic \citep{luan2024heterophilic} dataset. We highlight that CSNN also outperform baselines on Squirrel, another dataset classified as ambiguous heterophilic.

\textbf{Synthetic.} In the following, we consider a path graph with four nodes as in \Cref{ex:oversquashing} where the features are initialized in a similar fashion of the TreeNeighborsMatch task, but we consider that all nodes have a number of ``blue neighbors" to add noisy information along the path, while the source and target (nodes 3 and 0) have the same number of these neighbors. The goal is to transfer information from node 3 to node 0.

{\Cref{fig:pathGraph} shows an actual run of CSNN in such graph: if the norm of $\rmT_i$ is close to zero, then the arrow pointing to $i$ is not draw. Analogously to $\rmS_i$. If both are close to zero, the node is isolated, but if none of them are we consider the following:
\begin{itemize}
    \item If $\rmT_i \rmS_j$ (in the in-Laplacian) and $\rmT_i \rmS_j$ (in the out-Laplacian) are both close to 0, then the edge $(j,i)$ does not exist
    \item If $\rmT_i \rmS_j$ (in the out-Laplacian) and $\rmT_i^{\top} \rmT_i$ are both close to 0, then edge $(j,i)$ does not exist.
\end{itemize}
We observe restriction maps going to zero in a different configuration than the one exhibited in \Cref{ex:oversquashing} but still sending $x_3^{(0)}$ to node $0$ in the second layer, while suppressing other node features.

\begin{wrapfigure}[12]{r}{0.5\textwidth}
\vspace*{-5mm}
\includegraphics[width=0.95\linewidth]{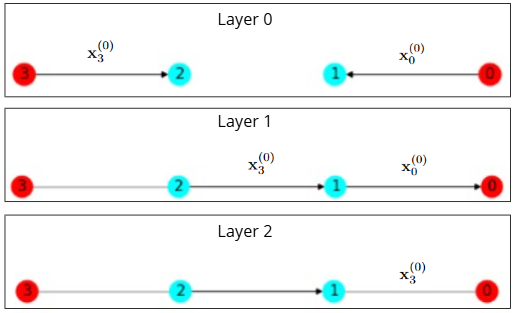}
\caption{Illustration of running CSNN in a path graph with four nodes along three layers.}
\label{fig:pathGraph}
\vspace*{-5mm}
\end{wrapfigure} 

Technically, the significance of a node $i$ over $j$ in a given layer $t$ is measured by the norm of the the term that multiplies $x_i^{(t)}$ in the expression of $((\Delta_{\gF(t)}^{in})^{\top}\Delta_{\gF(t)}^{out})_j$. For instance, in \Cref{fig:pathGraph}, the restriction maps provided that $1$ listens to $ 0$ and no one else, so it can be affected only by $ x_1^{(0)}$ and by $x_0^{(0)}$. A calculation shows that $x_1^{(0)}$ is multiplied by a quantity about $0.7$ while $x_0^{(0)}$ is multiplied by $132$, approximately. This guarantees that in layer $2$, when the message goes from node $1$ to $0$, the feature  $x_1^{(0)}$ is not relevant and, in practice, is ignored.

\section{Experiment details}
\label{app:exp}

In this section, we provide the grid of hyperparameters used in the experiments. If the number of GNN layers is set to 0, we use an MLP with two layers to learn the restriction maps. Otherwise, we adopt a GraphSAGE architecture with the specified number of layers.

We also trained CO-GNN using the hyperparameter table from \cite{pmlr-v235-finkelshtein24a}, considering $\mu$ and $\Sigma$ as explicit hyperparameters instead of treating CO-GNN$(\mu,\mu)$ and CO-GNN$(\Sigma,\Sigma)$ as separate model variants.

All datasets except for roman-empire were treated as undirected graphs. For the roman-empire dataset, we found that using the stored list of edges was preferable to doubling the edges, since the graphs from \cite{platonov2023critical} are stored as "directed" lists where elements as (0,2) and (2,0) are regarded as equivalent, for example.

All experiments were conducted on a cluster equipped with NVIDIA V100, A100, and H100 GPUs. The choice of GPU depended on the availability at the time of the experiments. Each machine was provisioned with at least 80 GB of RAM.

\begin{table}[htbp]
  \centering
  \caption{Hyperparameter configurations used across heterophilic benchmarks.}
  \label{tab:sweep-configs}
  \begin{tabular}{lcc}
    \toprule
      Parameter
      & roman-empire, amazon-ratings
      & minesweeper, tolokers, questions   \\
    \midrule
    sheaf dimension      & 3,\,4,\,5            & 3,\,4,\,5            \\
    \# layers            & 2--5                 & 2--5                 \\
    \# hidden channels   & 32,\,64              & 32,\,64              \\
    \# of GNN layers     & 0--5                 & 0--5                 \\
    GNN dimension        & 32,\,64              & 32,\,64              \\
    dropout              & 0.2                  & 0.2                  \\
    input dropout        & 0.2                  & 0.2                  \\
    \# epochs            & 2000                 & 2000                 \\
    activation           & GELU                 & GELU                 \\
    left weights         & true,\,false         & true,\,false         \\
    right weights        & true,\,false         & true,\,false         \\
    learning rate        & $0.02$               & $0.002,\,0.02$       \\
    weight decay         & $10^{-7},\,10^{-8}$  & $10^{-7},\,10^{-8}$  \\
    \bottomrule
  \end{tabular}
\end{table}

\begin{table}[htbp]
  \centering
  \caption{Hyperparameter configuration used for NeighborsMatch.}
  \label{tab:sweep-configs}
  \begin{tabular}{lcc}
    \toprule
      Parameter
      & NeighborsMatch   \\
    \midrule
    sheaf dimension      & 2            \\
    \# layers            & $r+1$        \\
    \# hidden channels   & 32           \\
    \# of GNN layers     & $r+1$        \\
    GNN dimension        & 32           \\
    dropout              & 0.0          \\
    input dropout        & 0.0          \\
    activation           & Id           \\
    left weights         & true         \\
    right weights        & true         \\
    layer norm           & true         \\
    \bottomrule
  \end{tabular}
\end{table}

We also present some statistics of the heterophilic benchmarks.

\begin{table}[htbp]
    \centering
        \caption{Statistics of the heterophilous datasets}
    \label{tab:statistics}
\vspace{5pt}
    \begin{footnotesize}
    \begin{tabular}{lcccccc}
    \toprule
    & roman-empire & amazon-ratings & minesweeper & tolokers & questions
    \\
    \midrule
    nodes & 22662 & 24492 & 10000 & 11758 & 48921
    \\
    edges & 32927 & 93050 & 39402 & 519000 & 153540
    \\
    avg degree & 2.91 & 7.60 & 7.88 & 88.28 & 6.28
    \\
    node features & 300 & 300 & 7 & 10 & 301
    \\
    classes & 18 & 5 & 2 & 2 & 2
    \\
    edge homophily & 0.05 & 0.38 & 0.68 & 0.59 & 0.84
    \\
    adjusted homophily & -0.05 & 0.14 & 0.01 & 0.09 & 0.02
    \\
    metric & acc & acc & roc auc & roc auc & roc auc
    \\
    \bottomrule
    \end{tabular}
    \end{footnotesize}
\end{table}

\section{Complexity and runtime of CSNN}

Using $d$ for dimension of the stalks, $h$ as the number of channels and $c = dh$, the complexity of our model is as follows:
\begin{itemize}
    \item $O(d^2|V|)$ for the embedding of graph features into the sheaf stalks;

    \item $O(|V|d^2h) = O(|V|cd)$ when applying $W_1$;

    \item $O(|V|dh^2) = O(|V|ch)$ when applying $W_2$;

    \item $O(2|E|d^2h) = O(|E|cd)$ for the two sparse Laplacian-vector multiplication;

    \item $O(2d^3(|V|+|E|)) = O(d^3(|V|+|E|))$ for constructing the blocks of the Laplacians.
\end{itemize}

This gives a total of $O(|V|(c(d+h) + d^3) + |E|(cd + d^3))$.
Since we use $1 \leq d \leq 5$, the stalk dimension contribution is small. We highlight that our code also contains a completely message-passing based implementation, that does not need constructing the Laplacian. This cheaper implementation yields a complexity of $O(c|V|(d+h) + |E|cd)$.

In the following we report the runtime of CSNN and the non-sheaf models on datasets of \Cref{table:nodeclass1}, as well as the improvement compared to the best baseline method.

\begin{table}[tbh]
\centering
\caption{Runtime comparison on datasets from \cite{platonov2023critical}. We report the mean time in seconds per epoch, averaged over 10 epochs. CSNN has a similar runtime compared to these simple baselines, and presents a positive improvement on accuracy in general. The baselines achieving the best accuracy are highlighted \textbf{bold}.}
\vspace{2pt}
\label{table:runtime_tab1}
\resizebox{\textwidth}{!}{
\begin{tabular}{lccccccc}
\toprule
Model & roman-empire & amazon-ratings & minesweeper & tolokers & questions & squirrel & chameleon \\
\midrule
GCN & 0.05s & 0.04s & 0.03s & 0.04s & 0.07s & \textbf{0.01s} & \textbf{0.02s} \\

SAGE & 0.07s & \textbf{0.04s} & 0.03s & 0.06s & 0.16s & 0.01s & 0.02s  \\

GAT-sep & \textbf{0.09s} & 0.05s & \textbf{0.05s} & \textbf{0.12s} & 0.21s & 0.01s & 0.02s \\

GT-sep & 0.14s & 0.16s & 0.07s & 0.14s & \textbf{0.32s} & 0.02s & 0.01s  \\

\midrule
CSNN & 0.05s & 0.10s & 0.06s & 0.14s & 0.16s & 0.05s & 0.03s
\\
\midrule
Improvement & $\uparrow$ 4.37\% & $\downarrow$ 2.90\% & $\uparrow$ 5.50\% & $\uparrow$ 2.00\% & $\uparrow$ 1.61\% & $\uparrow$ 4.33\% & $\uparrow$ 5.38\% \\
\bottomrule
\end{tabular}
}
\end{table}

We can see that sometimes CSNN is quicker than SAGE, and sometimes it is equal to GCN in terms of runtime. This might look counter-intuitive, but CSNN achieves its best performance with fewer parameters. For instance, for the roman-empire dataset, GCN has 2,269,714 parameters, while CSNN has 339,900, i.e. GCN has about 668\% more parameters.}

\end{document}